\newif\ifarxiv
\title[HIPG-MARL]{Health-Informed Policy Gradients for Multi-Agent Reinforcement Learning}
\author{Ross E. Allen}
\affiliation{
  \institution{MIT Lincoln Laboratory}
  \city{Lexington}
  \state{Massachusetts}}
\email{ross.allen@ll.mit.edu}
\author{Jayesh K. Gupta}
\affiliation{
  \department{Computer Science}
  \institution{Stanford University}}
\email{jkg@cs.stanford.edu}
\author{Jaime Pena}
\affiliation{
  \institution{MIT Lincoln Laboratory}
  \city{Lexington}
  \state{Massachusetts}}
\email{jdpena@ll.mit.edu}
\author{Yutai Zhou}
\affiliation{
  \institution{MIT Lincoln Laboratory}
  \city{Lexington}
  \state{Massachusetts}}
\email{yutai.zhou@ll.mit.edu}
\author{Javona White Bear}
\affiliation{
  \institution{MIT Lincoln Laboratory}
  \city{Lexington}
  \state{Massachusetts}}
\email{jwbear@ll.mit.edu}
\author{Mykel J. Kochenderfer}
\affiliation{
  \department{Aeronautics \& Astronautics}
  \institution{Stanford University}}
\email{mykel@stanford.edu}
\begin{abstract}
This paper proposes a definition of system health in the context of multiple agents optimizing a joint reward function. 
We use this definition as a credit assignment term in a policy gradient algorithm to distinguish the contributions of individual agents to the global reward. %
The health-informed credit assignment is then extended to a multi-agent variant of the proximal policy optimization algorithm and demonstrated on particle and multiwalker robot environments that have characteristics such as system health, risk-taking, semi-expendable agents, continuous action spaces, and partial observability. %
We show significant improvement in learning performance compared to policy gradient methods that do not perform multi-agent credit assignment\ifarxiv{\footnote{DISTRIBUTION STATEMENT A. Approved for public release. Distribution is unlimited.

This material is based upon work supported by the Under Secretary of Defense for Research and Engineering under Air Force Contract No. FA8702-15-D-0001. Any opinions, findings, conclusions or recommendations expressed in this material are those of the author(s) and do not necessarily reflect the views of the Under Secretary of Defense for Research and Engineering.

\copyright 2020 Massachusetts Institute of Technology.

Delivered to the U.S. Government with Unlimited Rights, as defined in DFARS Part 252.227-7013 or 7014 (Feb 2014). Notwithstanding any copyright notice, U.S. Government rights in this work are defined by DFARS 252.227-7013 or DFARS 252.227-7014 as detailed above. Use of this work other than as specifically authorized by the U.S. Government may violate any copyrights that exist in this work.}}\fi.%
\end{abstract}
\keywords{Multi-Agent Reinforcement Learning (MARL), Continuous Control, Multi-Robot Systems, Risk-Taking}
\newcommand{\BibTeX}{\rm B\kern-.05em{\sc i\kern-.025em b}\kern-.08em\TeX}
\newtheorem{lemma}{Lemma}
\newtheoremstyle{propertystyle}
  {\topsep}
  {\topsep}
  {\itshape}
  {0pt}
  {\bfseries}
  {}
  { }
  {\thmname{#1}\thmnumber{ #2}.\thmnote{ (#3)}}
\theoremstyle{propertystyle} \newtheorem{property}{Property}
\begin{document}


\pagestyle{fancy}
\fancyhead{}


\maketitle 


\section{Introduction}\label{sec:introduction}

Autonomous robotic systems are commonly employed for tasks described as dull, dirty, and dangerous. %
Multi-robot systems are particularly well suited for dirty and dangerous tasks as they are robust to single-agent degradation and failures. %
Such degradation can arise from damage to an agent's sensors or actuators, thus limiting the agent's ability to observe the environment and constricting the actions it may take. %
We will use the term \emph{system health} to refer to an agent's current state of degradation relative to its nominal capabilities.

While multi-agent systems may be an attractive solution for a range of real-world tasks, developing distributed decision-making and control policies is still challenging. %
Decision-making in multi-agent systems can be modeled as a decentralized partially observable Markov decision process (Dec-POMDP)~\cite{kochenderfer2015decision}. %
In general, computing an optimal policy is \emph{NEXP-complete}~\cite{bernstein2002complexity}. %
While various solution techniques based on heuristic search~\cite{szer2005optimal,spaan2011scaling,oliehoek2013approximate}
and dynamic programming~\cite{hansen2004dynamic,boularias2008exact,Balaban2019} exist, they tend to quickly become intractable as the number of agents, states, and actions increase.

One approach to approximate solutions to such problems is to use deep reinforcement learning (deep-RL) \cite{vinyals2019grandmaster,openai2019five,gupta2017cooperative,lowe2017multi,foerster2018counterfactual}. %
However, a fundamental challenge is the \emph{multi-agent credit assignment problem}. %
When there are multiple agents acting simultaneously toward a shared objective, it is often unclear how to determine which actions of which agents are responsible for the joint reward. There may be strong inter-dependencies between the actions of different agents and long delays between joint actions and their eventual rewards~\cite{wolpert2002optimal}.

This work fuses the concept of system health with deep reinforcement learning and provides three contributions to the field of multi-agent decision-making. %
First, we outline a definition and properties for the concept of system health in the context of Markov decision processes. %
These definitions provide the framework for a subset of Dec-POMDPs that can be used to analyze multi-agent systems operating in hazardous and adversarial environments. %
Second, we use the definition of health to formulate a multi-agent credit assignment algorithm to be used to accelerate and improve multi-agent reinforcement learning. 
Third, we apply the health-informed crediting algorithm within a multi-agent variant of proximal policy optimization (PPO)~\cite{schulman2017proximal} and demonstrate significant learning improvements compared to existing algorithms in simulated environments involving legged and particle robots with continuous action spaces.

\section{Related Work}\label{sec:related_work}

Applying deep-RL to multi-agent decision-making is an active area of research. %
Hernandez-Leal, Kartal, and Taylor~\cite{hernandez2019survey} provide a comprehensive survey of the field. %
Gupta et al.~\cite{gupta2017cooperative} demonstrated how algorithms such as TRPO, DQN, DDPG, and A3C can be extended to a range of cooperative multi-agent problems. %
Lowe et al.~\cite{lowe2017multi}  
developed multi-agent deep deterministic policy gradients (MADDPG) that was capable of training in cooperative and competitive environments. 

Multi-agent reinforcement learning is challenging due to the problems of \emph{non-stationarity} and \emph{multi-agent credit assignment}. %
The non-stationary problem arises when a learning agent assumes all other learning agents are part of the environment dynamics. Since the individual agents are continuously changing their policies, the environment dynamics from the perspective of any one agent are continuously changing, thus breaking the Markov property~\cite{hernandez2017survey}.
While Lowe et al.~\cite{lowe2017multi} attempt to address the non-stationary problem, MADDPG is still shown to become ineffective at learning for systems with more than three or four agents. %

Gupta et al.~\cite{gupta2017cooperative} partially address the non-stationary problem through \emph{parameter sharing} whereby groups of homogeneous agents use identical copies of parameters for their local policies. 
Terry et al.~\cite{terry2020parameter} provide a theoretical analysis of how information centralization offered by parameter sharing alleviates some of the non-stationarity problem. 
However, parameter sharing techniques don't resolve the second fundamental challenge of multi-agent learning: multi-agent credit assignment---i.e.\ the challenge of identifying which actions from which agent at which time were most responsible for the overall performance (i.e. returns) of the system. %
Gupta et al. avoid explicit treatment of this problem by focusing on environments where the joint rewards can be decomposed into local rewards. However, in general, such local reward structures are not guaranteed to optimize joint returns~\cite{wolpert2002optimal}.

Wolpert and Tumer~\cite{wolpert2002optimal} 
developed the Wonderful Life Utility (WLU) and Aristocrat Utility (AU), which are forms of ``difference rewards''. %
Both WLU and AU attempt to assign credit to individual agents' actions by taking the difference of utility received versus utility that would have been received had a different action been taken by the agent. %
The comparison between actual returns and hypothetical returns is sometimes referred to as  \emph{counterfactual} learning~\cite{foerster2018counterfactual}. %
Predating most of the advancements in deep reinforcement learning, Wolpert and Tumer's work was restricted to small decision problems that could be handled in a tabular fashion~\cite{wolpert2002optimal,tumer2002learning}. %

Foerster et al.~\cite{foerster2018counterfactual} formulated an aristocrat-like crediting method that was able to leverage a deep neural network state-action value function (Q-value) within a policy gradient algorithm, referred to as counterfactual multi-agent (COMA) policy gradients. %
Using deep neural networks, they enabled crediting in large or continuous state spaces. %
Subsequent work on the multi-agent credit assignment led to the QMIX and Maven algorithms which factorized the Q-value across agents and demonstrated improved performance over COMA \cite{rashid2018qmix,mahajan2019maven}. However COMA, QMIX, and Maven required enumeration over all actions and were thus restricted to problems with discrete action spaces. 

Others have posed multi-agent, health-aware decision problems similar to the one we give in Section~\ref{sec:problem_statement} \cite{omidshafiei2016health,oliehoek2008cross}. 
These prior works use a planning-based approach which assumes detailed, a priori knowledge of the underlying Dec-POMDP's dynamics. This is fundamentally different from the learning-based approach we present in Section~\ref{sec:approach}.

\section{Problem Statement}\label{sec:problem_statement}

The problems presented in this work can be modeled as decentralized partially observable Markov decision processes (Dec-POMDPs), which are defined by the tuple $(\mathcal{I}, \mathcal{S}, \mathcal{A}_i, \mathcal{Z}_i, T, R)$. %
The set $\mathcal{I}$ represents a finite set of $n$ agents. %
The set $\mathcal{S}$ is the joint state space of all agents (finite or infinite). Assuming states are described in a vector form, let $\vect{s} \in \mathcal{S} \subseteq \mathbb{R}^m$ be a specific state of the system. 
The set $\mathcal{A}_i(\vect{s})$ is the action space of the $i$th agent in joint state $\vect{s}$. The vector $\vect{u}_t = \left(a_{1,t}, a_{2,t}, \ldots, a_{n,t} \right)$ represents a joint action at time $t$ where $a_{i,t} \in \, \mathcal{A}_i(\vect{s}_t)$. %
The set $\mathcal{Z}_i(\vect{s})$ is the set of observations for the $i$th agent in joint state $\vect{s}$. The vector $\vect{o}_t = \left(z_{1,t}, z_{2,t}, \ldots, z_{n,t} \right)$ represents a joint observation at time $t$ where $z_{i,t} \in \, \mathcal{Z}_i(\vect{s}_t)$. %
The transition function $T \left( \vect{s}' \mid \vect{s},\vect{u} \right)$ is the probability density associated with arriving in state $\vect{s}'$ given the joint action $\vect{u}$ was taken in state $\vect{s}$. %
The reward function $R(\vect{s}, \vect{u})$ gives the immediate reward for taking the joint action $\vect{u}$ while in state $\vect{s}$. %
The vector $\tau_{i,t} = \left(z_{i,1}, a_{i,1}, z_{i,2}, a_{i,2}, \ldots, z_{i,t} \right)$ represents the observation-action history for agent $i$ up to time $t$. %
Using notation similar to Foerster et al.~\cite{foerster2018counterfactual}, we represent group-wide joint variables in bold and joint quantities that exclude a particular agent with the term $\neg i$.

In order to solve the Dec-POMDP we seek a \emph{joint policy} $\vect{\pi}_{\vect{\theta}} (\vect{u} \mid \vect{s})$, composed of a set of \emph{local policies} $\pi_{\theta_i}(a_i \mid \tau_i)$, that 
maximizes the discounted joint returns $G_t = \sum_{l=0}^{t_f-t} \gamma^{l} r_{t+l}$, where $\gamma$ is the discount factor, $r$ is the empirical joint reward, and $t_f$ is the final time step in an episode or receding horizon. %
The local policies, parameterized by $\theta_i$, map an agent's observation-action history to its next action at each time step. %
For a group of $n$ agents that follow independent stochastic policies\footnote{Independent policies implies that agent $i$'s action at time $t$ is not conditioned on the actions of any other agent at time $t$. Note that, in the case of \emph{parameter sharing} where all agents have identical copies of parameters, $\theta_i = \theta$, the local policies can still be independent.} %
\cite{foerster2018counterfactual}, we have
\begin{equation}\label{eqn:joint_policy_prop}
\vect{\pi} \left( \vect{u} \mid \vect{s}, \vect{\theta} \right) = \prod_{i=1}^{n} \pi_i \left( a_i \mid \tau_i, \theta_i \right).
\end{equation}
Using the common definition of the state value function of policy $\vect{\pi}$, we define $V_{\vect{\pi}}(\vect{s}_t) = \mathbb{E}_{\vect{s}_{t+1:t_f}, \vect{u}_{t:t_f}} \left[ G_t \right]$ \cite{schulman2015high}. %
Similarly, we define the joint state-action value function as $Q_{\vect{\pi}}(\vect{s}_t, \vect{u}_t) = \mathbb{E}_{\vect{s}_{t+1:t_f}, \vect{u}_{t+1:t_f}} \left[  G_t \right]$.

\subsection{System Health}\label{subsec:system_health}

Here we introduce a concept we refer to simply as \emph{system health}, though there are alternative definitions used in the field of prognostic decision making (PDM) \cite{saxena2008metrics,balaban2013development}. %
If we represent the current state of the system with vector $\vect{s}$, then the system health, $\vect{h} \in \mathbb{R}^n$, constitutes a subvector of $\vect{s}$. %
Without loss of generality, we can define the state vector as $\vect{s} = \left(\vect{h}, \vect{p} \right)$, where $\vect{p}$ is the non-health components of the state.
Each element of the the health vector corresponds to the health of an individual agent and is in the interval $\left[0, 1\right]$, where 1 represents full health and 0 represents a fully degraded agent. %
We define the following properties associated with reduction of system health:

\begin{property}{Non-recoverable minimum health.}\label{prop:min_health}
\normalfont
Let $\vect{s}_{h_i = c}$ represent any state vector where the health of agent $i$ is given as $c \in \left[0, 1\right]$. We define the non-increasing nature of the health once minimum health has been reached (i.e. agent death) as follows
	\begin{equation}
	T \left( \vect{s}'_{h_i =\beta} \mid \vect{s}_{h_i = 0}, \vect{u}\right) = 0 \; \text{for} \; \beta > 0
	\end{equation}
\end{property}

\begin{property}{Constriction of reachable set in state space.}\label{prop:state_constriction}
\normalfont
Define the reachable set of joint state $\vect{s}$ and constriction of reachable set as follows
	\begin{align}
	\begin{split}
	\text{Let: } \mathcal{R}(\vect{s}) = \{ \vect{s}' \in \mathcal{S} \mid \exists \, \vect{u}: T \left( \vect{s}' \mid \vect{s}, \vect{u} \right) > 0 \} \\	
	\therefore \mathcal{R}(\vect{s}_{h_i =\beta}) \subseteq \mathcal{R}(\vect{s}_{h_i =\alpha}) \; \text{for} \; \alpha > \beta
	\end{split}	
	\end{align}
\end{property}

\begin{property}{Constriction of available actions in action space.}\label{prop:action_constriction}
\normalfont
Let $\mathcal{A}_i(\vect{s}) \subseteq \mathcal{A}_i$ represent the available action set for agent $i$ when the system is in state $\vect{s}$. Therefore the constriction of the available action set can be described as
	\begin{equation}
	\mathcal{A}_i(\vect{s}_{h_i = \beta}) \subseteq \, \mathcal{A}_i(\vect{s}_{h_i = \alpha}) \; \text{for} \; \alpha > \beta
	\end{equation}
\end{property}

\begin{property}{Constriction of observable set in observation space.}
\label{prop:observe_constriction}
\normalfont
Let $\mathcal{Z}_i(\vect{s}) \subseteq \mathcal{Z}_i$  represent the set of possible observations for agent $i$ when the system is in state $\vect{s}$. Therefore the constriction of observable set can be described as:
	\begin{equation}
	\mathcal{Z}_i(\vect{s}_{h_i = \beta}) \subseteq \, \mathcal{Z}_i(\vect{s}_{h_i = \alpha}) \; \text{for} \; \alpha > \beta
	\end{equation}
\end{property}

To provide real-world intuition about system health, we can consider multi-agent systems composed of physical robots. %
In such a case, system health can be thought of the state of damage or degradation of a physical robot. A health of zero implies a robot has completely `crashed' or been otherwise terminated and Property~\ref{prop:min_health} asserts that the robot will not become operational again. %
Properties~\ref{prop:state_constriction} and~\ref{prop:action_constriction} describe the effect of damaging a robot's actuators, thus partially or completely `crippling' its motion. %
Property~\ref{prop:observe_constriction} describes the effect of damaging a robot's sensors, thus limiting the observations it can make of the world. %

In this paper, we pay particular attention to the case of \emph{binary health states}, i.e. $h_i \in \lbrace 0,1 \rbrace$, where zero health represents a complete constriction of an agent's action space. %
In this case, agents are either fully functional or completely non-operational, where non-operational agents are incapable of selecting actions to interact with the environment. %
This is a natural formulation for many multi-agent problems, such as multi-agent computer games, where agents maintain their full functionality up until the moment they are non-operational.

\section{Approach}\label{sec:approach}

This section proposes a counterfactual learning algorithm that seeks to address the multi-agent credit assignment problem for systems that embody the characteristics of system health. %
We choose to restrict our scope to policy gradient methods because of their scalability to large and continuous state and action spaces~\cite{duan2016benchmarking}. %
We develop a health-informed policy gradient in \cref{subsec:health_marl_pg} and use it to propose a new multi-agent PPO variant in \cref{subsec:health_marl_ppo}. 


In general, each agent may be learning its own individual policy $\pi_{\theta_i}$, however this can lead to a \emph{non-stationary} environment from the perspective of any one agent and confound the learning process~\cite{hernandez2017survey}. %
Instead, for this paper, we assume that agents are executing identical copies of the same policy $\pi_{\theta}$ in a decentralized fashion, referred to as parameter sharing~\cite{gupta2017cooperative,huttenrauch2018local,terry2020parameter}. 
We adopt a \emph{centralized learning, decentralized execution} architecture whereby training data can be centralized between training episodes even if no such centralization of information is possible during execution---a common approach in the multi-agent RL literature~\cite{gupta2017cooperative,lowe2017multi,foerster2018counterfactual}. 


\subsection{Health-Informed Multi-Agent Policy Gradients}\label{subsec:health_marl_pg}

To develop a policy gradient approach for health-based multi-agent systems, 
we begin with the policy gradient theorem~\cite{sutton2018reinforcement_chp13}
\begin{equation}\label{eqn:policy_gradient_theorem}
\nabla J ( \vect{\theta} ) \propto \sum_{\vect{s}} \mu (\vect{s}) \sum_{\vect{u}} q_{\vect{\pi}} \left( \vect{s},\vect{u} \right) \nabla \vect{\pi} \left( \vect{u} \mid \vect{s},\vect{\theta} \right)
\end{equation}
Equation~\ref{eqn:policy_gradient_theorem} gives an analytical expression for the gradient of the objective function with respect to policy parameters $\vect{\theta}$, where $q_{\vect{\pi}}(\vect{s},\vect{u})$ is the true state-action value and $\mu(\vect{s})$ refers to the ergodic state distribution. %
To develop a practical algorithm, we need a method for sampling such an analytical expression that has an expected value equal or approximately equal to Equation \ref{eqn:policy_gradient_theorem}. 
\cref{eqn:policy_gradients_multi_agent} gives a multi-agent form for sampling the policy gradient 
considering local policies $\pi_{\theta_i}$ that map an individual agent's local trajectory $\tau_{i,t}$ to actions $a_{i,t}$:
\begin{equation}\label{eqn:policy_gradients_multi_agent}
g_{\Psi} = \mathbb{E}_{\vect{\pi}} \left[ \sum_{i=1}^{n}  \Psi_{i,t} \nabla_{\theta_i} \log \pi_{\theta_i} (a_{i,t} \mid \tau_{i,t}, \theta_i ) \right]
\end{equation}
where the term $\Psi$ can be formulated to affect the bias and variance of the learning process; see \cref{app:multagent_pg} for derivation~\cite{schulman2015high,foerster2018counterfactual}. %
Typically $\Psi$ takes the form of the return or discounted return $G_t$ (i.e. REINFORCE algorithm~\cite{williams1992simple}); the returns baselined on the state value function $V_{\vect{\pi}}(\vect{s}_t)$ (i.e. REINFORCE with baseline \cite{sutton2000policy}); the state-action value function $Q_{\vect{\pi}}(\vect{s}_t, \vect{u}_t)$; the advantage function $A_{\vect{\pi}}(\vect{s}_t, \vect{u}_t) =  Q_{\vect{\pi}}(\vect{s}_t, \vect{u}_t) - V_{\vect{\pi}}(\vect{s}_t)$; or the generalized advantage function $A_{\vect{\pi}}^{\text{GAE}}$~\cite{schulman2015high,schulman2017proximal}. %
If all agents have the same policy parameters, receive identical rewards throughout an episode, and employ the same $\Psi$ function, then Equation~\ref{eqn:policy_gradients_multi_agent} renders the same gradient at each time step for all agents' actions. %
This uniformity in policy gradients is problematic because it 
results in all actions at a given time step being promoted equally during the next training cycle---i.e. the multi-agent credit assignment problem.  

To overcome this problem, we use the term $\Psi_{i,t}$ to distinguish gradients between different agents' actions at the same time step. %
One option is to set $\Psi_{i,t} = Q_{\pi} \left( \tau_{i,t}, a_{i,t} \right)$, which defines an \emph{observation-action function} referred to as a \emph{local critic}. %
Since all agents are expected to receive distinct observations at each time step, $Q_{\pi} \left( \tau_{i,t}, a_{i,t} \right)$ is expected to be distinct for each agent at a given time step. 
However, this approach relies on making value approximations based on limited information, which can significantly impact learning, as we demonstrate in Section~\ref{sec:experiments}. %
If we instead leverage our prior assumption of \emph{centralized learning}~\cite{gupta2017cooperative,lowe2017multi,foerster2018counterfactual}, then we can make direct use of the central value functions, $V_{\vect{\pi}}(\vect{s}_t)$ and $Q_{\vect{\pi}}(\vect{s}_t, \vect{u}_t)$; however, this still does not resolve the multi-agent credit assignment problem.

We can use the concept of system health 
in an attempt to address the credit assignment problem. %
Our technique stems from the concepts of counterfactual baselines~\cite{foerster2018counterfactual} or difference rewards~\cite{nguyen2018credit}, and is further motivated by the Wonderful Life Utility (WLU)~\cite{wolpert2002optimal}. 
The idea is that credit is assigned to an agent at a given time step by comparing the true joint return from time $t$, 
with the expected return from a hypothetical or ``counterfactual'' scenario where agent $i$ had been terminated at time $t$. 
We propose the following \emph{minimum-health baseline} for multi-agent credit assignment
\begin{equation}\label{eqn:min_health_baseline}
\begin{split}
\Psi_{i,t} &= h_{i,t} \left( G_t - V_{\vect{\pi}} \left( \vect{s}_t^{\neg h_{i,t}}, h_{i,\text{min}}\right)  \right) \\
&= h_{i,t} G_t - b\left(\vect{s}^{\neg i}\right)
\end{split}
\end{equation}
where $h_{i,t}$ is the health of agent $i$ at time $t$ and $\vect{s}_t^{\neg h_{i,t}}$ represents the true joint state of the system at time $t$, except that the health of agent $i$ is replaced with minimum health (typically 0). 

To understand the multiplication by the true health $h_{i,t}$ in Equation~\ref{eqn:min_health_baseline}, consider the implications of Properties~\ref{prop:state_constriction} and~\ref{prop:action_constriction} in Section~\ref{subsec:system_health}. %
If reduction in system health constricts the available actions and reachable states at a given state, and these constrictions are not encoded within the policy $\pi_i$, then an inconsistency arises between chosen and executed actions that can affect training \cite{fujita2018clipped}. %
This would occur if the policy selects an action from a health state that the physical agent is incapable of executing. %

To overcome this inconsistency, we modify the policy gradient such that the contribution from agent $i$ at time $t$ is multiplied by the true health of agent $i$ at time $t$; i.e. $h_{i,t} \in [0,1]$. %
The motivation for this modification is that, as an agent's health degrades and the available action set is constricted, it becomes less likely that the action selected by the policy aligns with the action executed by the agent. %
By attenuating the policy gradient by the health of the agent, policy learning occurs more slowly on data generated in low health states and thus reduces the effect of mismatching chosen and executed actions. %
Section~\ref{sec:experiments} investigates the case where the health state is binary for each agent and a zero-health state shrinks the action set until only a zero-vector action is executable by the agent. %
\cref{lem:non_bias_baseline} states that gradient of the minimum-health baseline term in \cref{eqn:min_health_baseline} is zero, and thus does not introduce bias to the gradient estimate in \cref{eqn:policy_gradients_multi_agent}. %
\cref{lem:binary_health_convergence} states  that health-informed policy gradients converge to a locally optimal policy for systems with binary health states. 

\begin{lemma}\label{lem:non_bias_baseline}
Let the health-informed gradient in \cref{eqn:policy_gradients_multi_agent} be written as $g_{\Psi} = g_h - g_b$, then we have
\begin{equation*}
g_b = \mathbb{E}_{\vect{\pi}} \left[ \sum_{i=1}^{n}  b\left(\vect{s}^{\neg i}\right) \nabla_{\theta_i} \log \pi_{\theta_i} (a_{i,t} \mid \tau_{i,t}, \theta_i ) \right] = 0
\end{equation*}
\end{lemma}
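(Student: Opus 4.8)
The plan is to show that each summand of $g_b$ vanishes once the expectation over agent $i$'s action $a_{i,t}$ is taken, exploiting the fact that the minimum-health baseline $b(\vect{s}^{\neg i}) = h_{i,t}\, V_{\vect{\pi}}(\vect{s}_t^{\neg h_{i,t}}, h_{i,\text{min}})$ from \cref{eqn:min_health_baseline} does not depend on $a_{i,t}$. This is the multi-agent counterpart of the classical fact that an action-independent baseline leaves the policy gradient unbiased~\cite{sutton2000policy,schulman2015high}.

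First I would rewrite $\mathbb{E}_{\vect{\pi}}$ using the tower property, conditioning on the trajectory prefix up through the joint state $\vect{s}_t$ and the local histories $\tau_{j,t}$, so that for the $i$-th term the only remaining randomness is $a_{i,t} \sim \pi_{\theta_i}(\cdot \mid \tau_{i,t}, \theta_i)$. Next I would argue that $b(\vect{s}^{\neg i})$ is a deterministic function of $\vect{s}_t$: the health $h_{i,t}$ is a component of the state realized \emph{before} $a_{i,t}$ is drawn, and $V_{\vect{\pi}}(\vect{s}_t^{\neg h_{i,t}}, h_{i,\text{min}})$ is, by the definition of the state value function, a function of the health-substituted state at time $t$ and not of the action taken there. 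Hence $b(\vect{s}^{\neg i})$ can be pulled outside the inner expectation over $a_{i,t}$.

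What remains is $\mathbb{E}_{a_{i,t} \sim \pi_{\theta_i}}[\nabla_{\theta_i} \log \pi_{\theta_i}(a_{i,t} \mid \tau_{i,t}, \theta_i)]$, the standard score-function expectation: it equals $\sum_{a_{i,t}} \pi_{\theta_i}(a_{i,t} \mid \tau_{i,t}, \theta_i)\, \nabla_{\theta_i} \log \pi_{\theta_i}(a_{i,t} \mid \tau_{i,t}, \theta_i) = \nabla_{\theta_i} \sum_{a_{i,t}} \pi_{\theta_i}(a_{i,t} \mid \tau_{i,t}, \theta_i) = \nabla_{\theta_i} 1 = 0$, with an integral in place of the sum (and an interchange of gradient and integral under mild regularity) for continuous action spaces. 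The independence of the local policies, \cref{eqn:joint_policy_prop}, is what makes this clean: it gives $\nabla_{\theta_i} \log \pi_{\theta_j} = 0$ for $j \ne i$, and conditioning on $\vect{s}_t$ renders $a_{i,t}$ independent of the other agents' simultaneous actions, so no cross terms survive. Summing these zeros over $i$ (and over time steps, if an outer sum over $t$ is intended in \cref{eqn:policy_gradients_multi_agent}) gives $g_b = 0$.

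I expect the only step needing real care to be the claim that $b(\vect{s}^{\neg i})$ carries no dependence on $a_{i,t}$; this is exactly where the system-health structure enters, since health is a state variable realized prior to the action, the counterfactual value is evaluated at the current (health-substituted) state rather than along a rolled-out counterfactual trajectory, and agents act independently given the state. Once that measurability point is settled, the remainder is the textbook baseline argument.
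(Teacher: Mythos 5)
Your proposal is correct and follows essentially the same route as the paper's proof: both rest on the observation that $b\left(\vect{s}^{\neg i}\right)$ is action-independent, on the factorization of the joint policy via \cref{eqn:joint_policy_prop} so the marginal over $a_i$ can be isolated, and on the score-function identity $\sum_{a_i}\nabla_{\theta_i}\pi_i = \nabla_{\theta_i}1 = 0$. The only difference is presentational --- you phrase it as a tower-property/conditional-expectation argument while the paper carries out the explicit sums over $\vect{s}$ and $\vect{u}$ from \cref{eqn:policy_gradient_theorem} --- and your remark about the interchange of gradient and integral for continuous actions is a point the paper leaves implicit.
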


\begin{proof}
Applying \cref{eqn:min_health_baseline} to \cref{eqn:policy_gradient_theorem} we can write
\begin{align*}
g_{\Psi} &= &&\sum_{\vect{s}}{\mu (\vect{s})} \sum_{\vect{u}}{\vect{\pi}(\vect{u} \mid \vect{s}, \vect{\theta})} \cdot \\
& &&\sum_{i=1}^{n} \left(h_i q_{\vect{\pi}}(\vect{s}, \vect{u}) -  b\left(\vect{s}^{\neg i}\right)\right)\nabla_{\theta_i} \log{\pi_{i}(a_{i} \mid \tau_{i}, \theta_{i})} \\
&= &&g_h - g_b
\end{align*}
Separating out the baseline's gradient contribution and applying \cref{eqn:joint_policy_prop}
\begin{align*}
g_b &=&&\sum_{\vect{s}}{\mu (\vect{s})} \sum_{\vect{u}}{\vect{\pi}(\vect{u} \mid \vect{s}, \vect{\theta})} \cdot \\
&&&\sum_{i=1}^{n} b\left(\vect{s}^{\neg i}\right) \nabla_{\theta_i} \log{\pi_{i}(a_{i} \mid \tau_{i}, \theta_{i})} \\
&=&&\sum_{\vect{s}}{\mu (\vect{s})} \sum_{i=1}^{n} b\left(\vect{s}^{\neg i}\right) \cdot \\
&&&\sum_{\vect{u}^{\neg i}} \sum_{a_i} \prod_{j=1, j \neq i}^{n}\pi_j(a_j \mid \tau_j, \theta_j) \nabla_{\theta_i} \pi_{i}(a_{i} \mid \tau_{i}, \theta_{i}) \\
&=&&\sum_{\vect{s}}{\mu (\vect{s})} \sum_{i=1}^{n} b\left(\vect{s}^{\neg i}\right) \cdot \\
&&&\sum_{\vect{u}^{\neg i}} \prod_{j=1, j \neq i}^{n}\pi_j(a_j \mid \tau_j, \theta_j) \nabla_{\vect{\theta}} \sum_{a_i} \pi_{i}(a_{i} \mid \tau_{i}, \theta_{i}) \\
&=&&\sum_{\vect{s}}{\mu (\vect{s})} \sum_{i=1}^{n} b\left(\vect{s}^{\neg i}\right) \cdot \\
&&&\sum_{\vect{u}^{\neg i}} \prod_{j=1, j \neq i}^{n}\pi_j(a_j \mid \tau_j, \theta_j) \nabla_{\vect{\theta}}1 \\
&=&& 0
\end{align*}
\end{proof}

\begin{lemma}\label{lem:binary_health_convergence}
Given a system with binary health states $h_i \in \lbrace 0,1 \rbrace$, where $h_i = 0$ constricts the action space to a single element, $\lvert \mathcal{A}_i \left(\vect{s}_{h_i=0}\right) \rvert = 1$, following the gradient $g_{\Psi}$ in \cref{eqn:policy_gradients_multi_agent} at each iteration $k$ gives 
\begin{equation}
\lim_{k \rightarrow \infty} \| \nabla_{\vect{\theta}} J \| = 0
\end{equation}
\end{lemma}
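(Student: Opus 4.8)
The plan is to show that, under the binary-health hypothesis, the health-informed gradient $g_{\Psi}$ of \cref{eqn:policy_gradients_multi_agent} is (in expectation) exactly the ordinary multi-agent policy gradient, so that following it is stochastic gradient ascent on $J$, and then to invoke a standard stochastic-gradient convergence result.

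First I would dispose of the contributions of ``dead'' agents. If $h_{i,t} = 0$ then by hypothesis $\lvert \mathcal{A}_i(\vect{s}_{h_i=0}) \rvert = 1$, so the local policy $\pi_{\theta_i}(\cdot \mid \tau_{i,t}, \theta_i)$ is a point mass on that single action and $\nabla_{\theta_i} \log \pi_{\theta_i}(a_{i,t} \mid \tau_{i,t}, \theta_i) = \nabla_{\theta_i} \log 1 = 0$. Hence for every value $h_{i,t} \in \lbrace 0,1 \rbrace$ we get the identity $h_{i,t}\, \nabla_{\theta_i} \log \pi_{\theta_i} = \nabla_{\theta_i} \log \pi_{\theta_i}$: it is trivial when $h_{i,t}=1$, and both sides vanish when $h_{i,t}=0$. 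This is the one place the binary-health / singleton-action restriction is actually used, and it is where I expect the main difficulty to lie — everything afterward is bookkeeping plus a textbook theorem.

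Next I would use \cref{lem:non_bias_baseline} to discard the baseline. Writing $g_{\Psi} = g_h - g_b$ as in that lemma, we have $g_b = 0$, so taking expectations over trajectories and using $\mathbb{E}[G_t \mid \vect{s}_t, \vect{u}_t] = q_{\vect{\pi}}(\vect{s}_t, \vect{u}_t)$ gives $\mathbb{E}_{\vect{\pi}}[g_{\Psi}] = \mathbb{E}_{\vect{\pi}}\big[ \sum_{i=1}^{n} h_{i,t}\, q_{\vect{\pi}}(\vect{s}_t, \vect{u}_t)\, \nabla_{\theta_i} \log \pi_{\theta_i} \big]$. Applying the identity from the previous step removes the $h_{i,t}$ factors, leaving $\mathbb{E}_{\vect{\pi}}[g_{\Psi}] = \mathbb{E}_{\vect{\pi}}\big[ \sum_{i=1}^{n} q_{\vect{\pi}}(\vect{s}_t, \vect{u}_t)\, \nabla_{\theta_i} \log \pi_{\theta_i} \big]$, which by the multi-agent policy-gradient derivation of \cref{app:multagent_pg} together with \cref{eqn:policy_gradient_theorem} is proportional to $\nabla_{\vect{\theta}} J(\vect{\theta})$. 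Therefore the update $\vect{\theta}_{k+1} = \vect{\theta}_k + \alpha_k g_{\Psi}$ is an unbiased stochastic gradient ascent iteration on $J$.

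Finally I would invoke the standard convergence theorem for stochastic gradient ascent on a (generally nonconvex) smooth objective: assuming $J$ is bounded above with Lipschitz-continuous gradient, that the variance of $g_{\Psi}$ remains bounded along the optimization trajectory, and that the step sizes obey the Robbins–Monro conditions $\sum_k \alpha_k = \infty$ and $\sum_k \alpha_k^2 < \infty$, one obtains $\lim_{k \to \infty} \lVert \nabla_{\vect{\theta}} J \rVert = 0$, i.e., the iterates converge to a stationary (locally optimal) point. I would state these regularity conditions explicitly as standing assumptions, since the lemma as written asks for an exact limit rather than a $\liminf$ or subsequential statement.
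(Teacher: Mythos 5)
Your proposal is correct and follows essentially the same route as the paper's proof: use the singleton action space of zero-health agents to show their score-function gradients vanish (so the factor $h_{i,t}$ can be dropped), invoke \cref{lem:non_bias_baseline} to eliminate the baseline, and reduce $g_{\Psi}$ to the standard single-agent REINFORCE gradient whose convergence is then cited. Your final paragraph is in fact slightly more careful than the paper, which simply appeals to the known convergence of REINFORCE without spelling out the Robbins--Monro step-size and smoothness assumptions you make explicit.
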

\begin{proof}
From \cref{lem:non_bias_baseline} and \cref{eqn:policy_gradients_multi_agent}, we have
\begin{align*}
g_{\Psi} &= g_h = \mathbb{E}_{\vect{\pi}} \left[ \sum_{i=1}^{n}  h_{i,t} G_t \nabla_{\theta_i} \log \pi_{\theta_i} (a_{i,t} \mid \tau_{i,t}, \theta_i ) \right].
\end{align*}
With binary health states, we can separate the $n$ agents into groups based on their health at time $t$ such that $\mathcal{I}_1 = \lbrace i : h_{i,t}=1 \rbrace$ and $\mathcal{I}_0 = \lbrace i : h_{i,t}=0 \rbrace$.
For the group of agents in $\mathcal{I}_0$, the action space is constrained such that a single action is deterministically selected, regardless of $\theta_i$. Therefore, 
\begin{align*}
\nabla_{\theta_i} \log \pi_{\theta_i} (a_{i,t} \mid \tau_{i,t}, \theta_i ) = 0, \quad i \in \mathcal{I}_0. 
\end{align*}
As a result, we can say for all $i$
\begin{align*}
h_{i,t} \nabla_{\theta_i} \log \pi_{\theta_i} (a_{i,t} \mid \tau_{i,t}, \theta_i ) = \nabla_{\theta_i} \log \pi_{\theta_i} (a_{i,t} \mid \tau_{i,t}, \theta_i ).
\end{align*}
Applying this result and \cref{eqn:joint_policy_prop} gives
\begin{align*}
g_{\Psi} &= \mathbb{E}_{\vect{\pi}} \left[ G_t \sum_{i}^{n}  \nabla_{\theta_i} \log \pi_{\theta_i} (a_{i,t} \mid \tau_{i,t}, \theta_i ) \right] \\
&=  \mathbb{E}_{\vect{\pi}} \left[ G_t \nabla_{\vect{\theta}} \log \vect{\pi}_{\vect{\theta}} (\vect{u}_t \mid \vect{s}_t, \vect{\theta} ) \right],
\end{align*}
which is the gradient of the single-agent REINFORCE algorithm that has proven convergence properties for undiscounted returns \cite{sutton2000policy,williams1992simple}.
\end{proof}

Equation~\ref{eqn:min_health_baseline} provides a health-informed counterfactual baseline on the state value function that is completely agnostic to the action space, a property not seen in prior work~\cite{foerster2018counterfactual,wolpert2002optimal,nguyen2018credit,
rashid2018qmix,mahajan2019maven}. %
This makes our approach well suited for use in policy optimization techniques for large or continuous action spaces, such as TRPO and PPO~\cite{schulman2015trust,schulman2017proximal}.

Other recent work on multi-agent credit assignment draws inspiration from WLU and Aristocrat Utility (AU).
Nguyen et al. \cite{nguyen2018credit} offer a modern form of WLU that requires maintaining explicit counts of discrete actions and state visitations and is not well posed for continuous domains and deep-RL. %
Modern implementations of Aristocrat Utility (AU) such as COMA~\cite{foerster2018counterfactual} require enumeration over all possible actions or computationally expensive Monte Carlo analysis at each time step. %
The health-informed baseline suffers no such limitations. %

\subsection{Health-Informed Multi-Agent Proximal Policy Optimization}\label{subsec:health_marl_ppo}

While the health-informed credit assignment technique described in Section~\ref{subsec:health_marl_pg} is applicable to any reinforcement learning algorithm that uses value functions, we choose to demonstrate our technique within a multi-agent variant of proximal policy optimization (PPO) 
\cite{schulman2017proximal}. 
PPO is chosen because it has been shown to work well with continuous action spaces~\cite{duan2016benchmarking} as well as multi-agent environments~\cite{openai2019five}.

Our health-informed counterfactual baseline is evaluated using a centralized critic, $V_w(\vect{s}_t)$, which we model with a deep neural network parameterized by weights $w$. As with the original PPO paper \cite{schulman2017proximal}, the centralized critic is trained using generalized advantage estimation \cite{schulman2015high} where centralized value targets are defined as 
\begin{equation}\label{eqn:value_target}
V_t^\text{targ} = A_t^\text{GAE} + V_{w_\text{old}}(\vect{s}_t),
\end{equation}
and the centralized value loss function is
\begin{equation}\label{eqn:value_update}
L^{\text{VF}}(w) = \left( V_w(\vect{s}_t) - V_t^\text{targ} \right)^2.
\end{equation}
Now replacing the returns, $G_t$, in \cref{eqn:min_health_baseline} with the more general value targets, our counterfactual baseline becomes
\begin{equation}\label{eqn:min_health_baseline_ppo}
\Psi_{i,t} = h_{i,t} \left( V_t^\text{targ} - V_{w_\text{old}} \left( \vect{s}_t^{\neg i}\right)  \right)
\end{equation}
We apply the health-informed counterfactual baseline in \cref{eqn:min_health_baseline_ppo} to PPO's surrogate objective function and formulate the clipped surrogate objective function 
\begin{equation}\label{eqn:surrogate_objective}
L (\theta) = \mathbb{E}_t \left[ \frac{\pi_{\theta}(a_{i,t} \mid \tau_{i,t})}{\pi_{\theta_\text{old}}(a_{i,t} \mid \tau_{i,t})} \, \Psi_{i,t}  \right] = \mathbb{E}_t \left[ \rho_{i,t}(\theta) \Psi_{i,t} \right]
\end{equation}
\begin{equation}\label{eqn:surrogate_objective_clipped}
\begin{split}
L^{\text{CLIP}+\text{S}} (\theta)  = \mathbb{E}_t [ c S_{\pi_\theta} & + \\
\text{min} ( &\rho_{i,t}(\theta) \Psi_{i,t}, \\
& \text{clip} \left( \rho_{i,t}(\theta), 1 - \epsilon, 1 + \epsilon \right) \Psi_{i,t} )].
\end{split}
\end{equation}
As in the original PPO paper, we also need to train the value function network and augment with an entropy bonus---$S_{\pi_\theta}$ weighted by hyperparameter $c$---to encourage exploration \cite{schulman2017proximal}. %

\cref{alg:mappo_baseline} gives the training process for a group of $n$ cooperative agents in a centralized-learning, decentralized-execution framework using the health-informed multi-agent proximal policy optimization (MAPPO) algorithm and parameter sharing.

\begin{algorithm}[tb]
   \caption{Health-Informed Multi-Agent PPO}
   \label{alg:mappo_baseline}
\begin{algorithmic}
   \STATE Initialize $\theta$ and $w$
   \FOR{$iteration=1,2,...$}
   \STATE Run local policies $\pi_{\theta}$ on $n$ agents for $t_f$ timesteps
   \STATE Compute value targets $V_t^\text{targ}, \forall t \in \{1,..,t_f \}$
   \STATE Compute $\Psi_{i,t}$ with \cref{eqn:min_health_baseline_ppo} $\forall i, \forall t$
   \STATE Compute $\theta'$ with PPO update in \cref{eqn:surrogate_objective_clipped}, with $K$ epochs and minibatch size $M \leq n t_f$
   \STATE Compute $w'$ with~\cref{eqn:value_update}
   \STATE Update $\theta \leftarrow \theta'$, $w \leftarrow w'$
   \ENDFOR
\end{algorithmic}
\end{algorithm}

\section{Experiments}\label{sec:experiments}

\begin{figure*}
\setlength\tabcolsep{1.5pt}
\centering
	\begin{tabular}{ccc}
\subfloat[]{
	\includegraphics[width=0.3\textwidth]{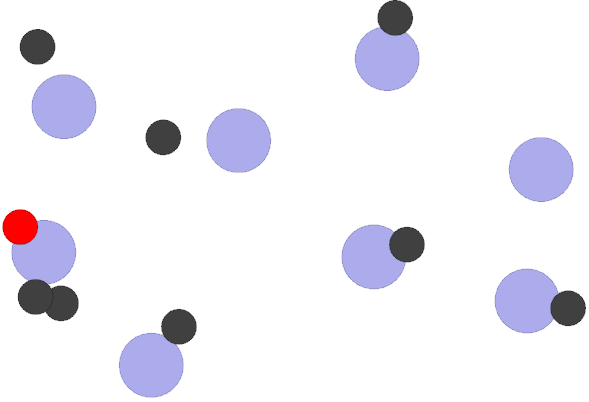}
	\label{fig:hazard_nav}
	} &
	\subfloat[]{
	\includegraphics[width=0.3\textwidth]{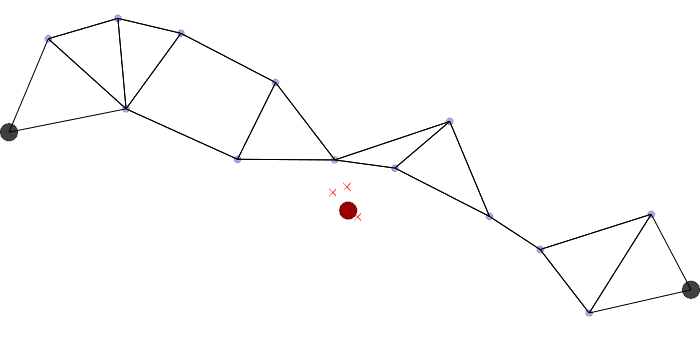}
	\label{fig:hazard_comm}
} &
	\subfloat[]{
	\includegraphics[width=0.3\textwidth]{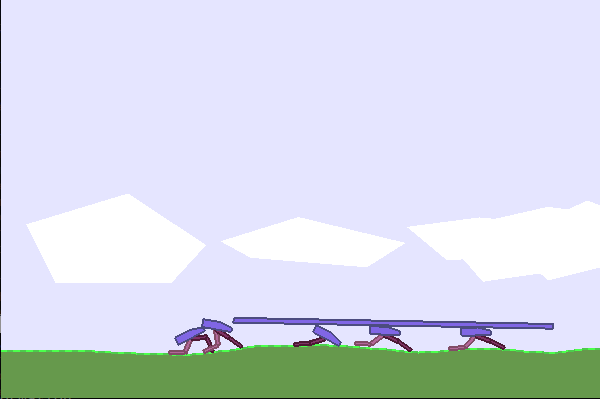}
	\label{fig:multiwalker}
}
	\end{tabular}
	\caption{The 8-agent hazard navigation (\ref{fig:hazard_nav}), 16-agent hazardous communication (\ref{fig:hazard_comm}), and 5-agent multiwalker (\ref{fig:multiwalker}) environments. 
	(\ref{fig:hazard_nav}) black dots represent landmarks to be covered by agents. The red dot is the hazardous landmark that has been `revealed' by an agent in its vicinity. 
	(\ref{fig:hazard_comm}) The larger black dots represent the terminals to be connected. 
The smaller dots represent agents. Links are formed between agents that are within each other's communication radius. The red dot is the environmental hazard. Tiny red crosses represent agents that have been terminated due to proximity to the hazard.
	(\ref{fig:multiwalker}) Walker balance the package on their heads and walk together. Fallen agents are given a zero-health and rendered immobilized}
\label{fig:environments}
\end{figure*}

\begin{figure*}
\setlength\tabcolsep{1.5pt}
\centering
	\begin{tabular}{cc}
	\subfloat[]{
	\includegraphics[width=0.45\textwidth]{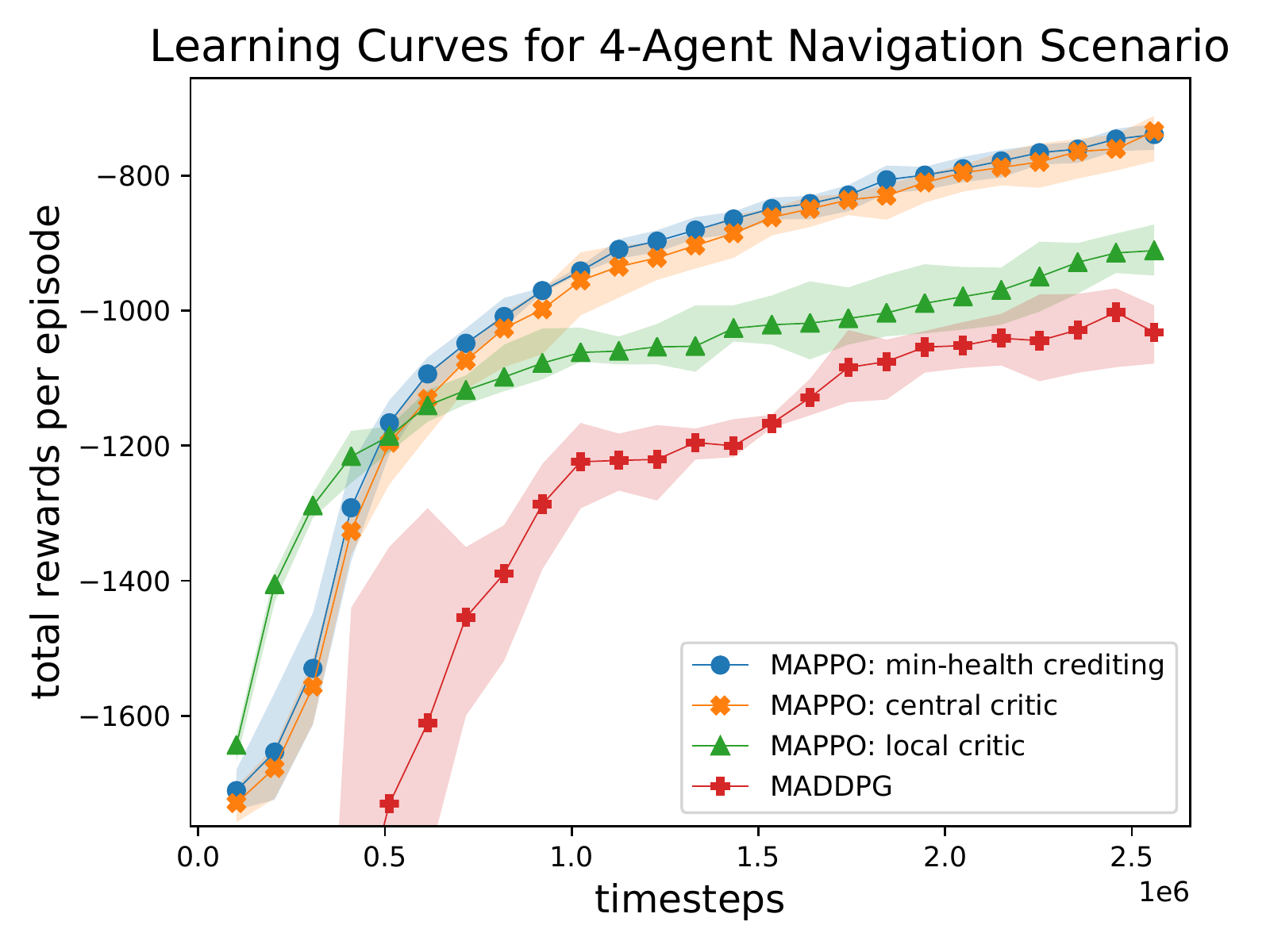}
	\label{fig:4agent_spread}
	} &
	\subfloat[]{
	\includegraphics[width=0.45\textwidth]{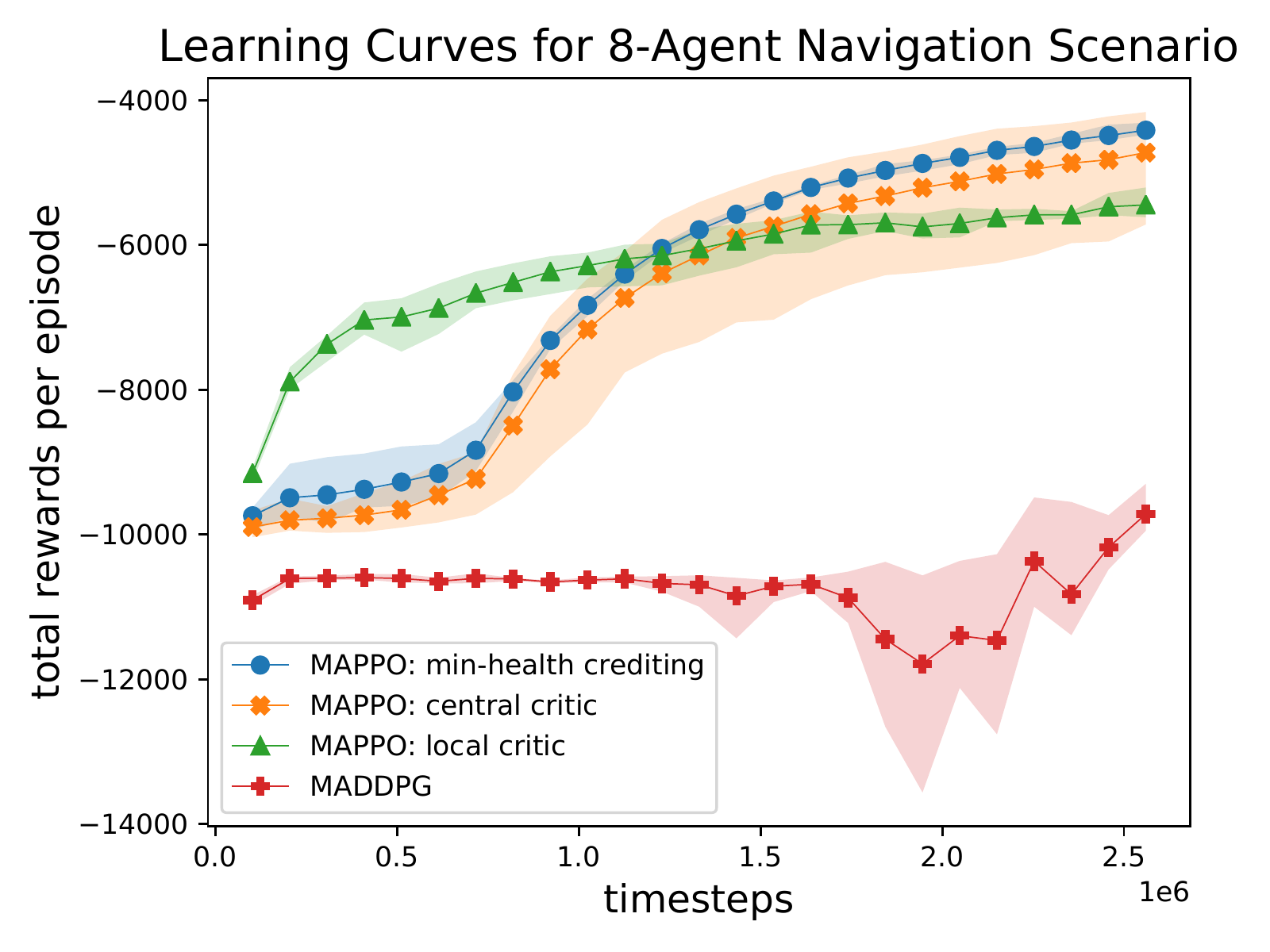}
	\label{fig:8agent_spread}
	} \\
	\subfloat[]{
	\includegraphics[width=0.45\textwidth]{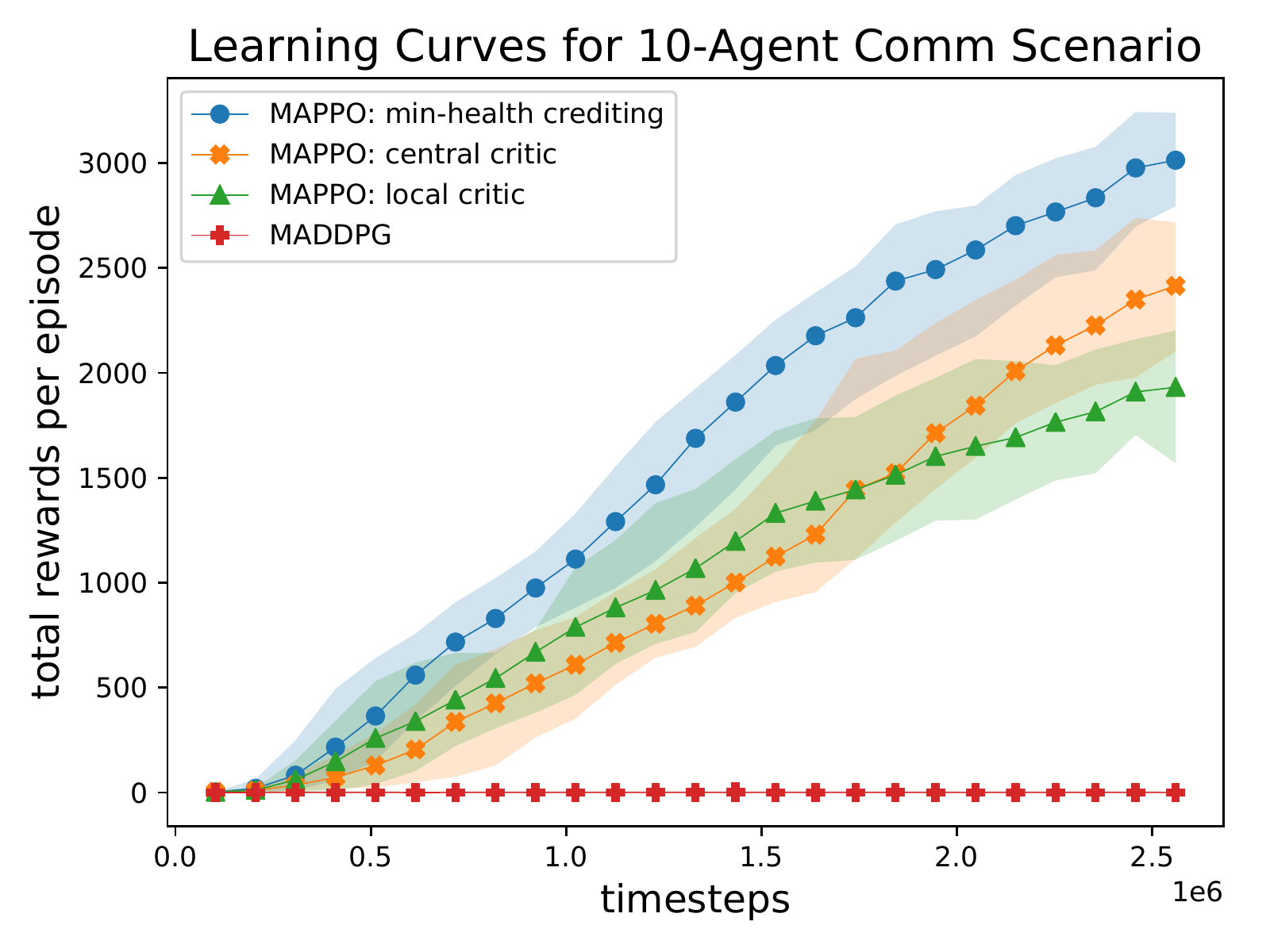}
	\label{fig:10agent_graph}
	} &
	\subfloat[]{
	\includegraphics[width=0.45\textwidth]{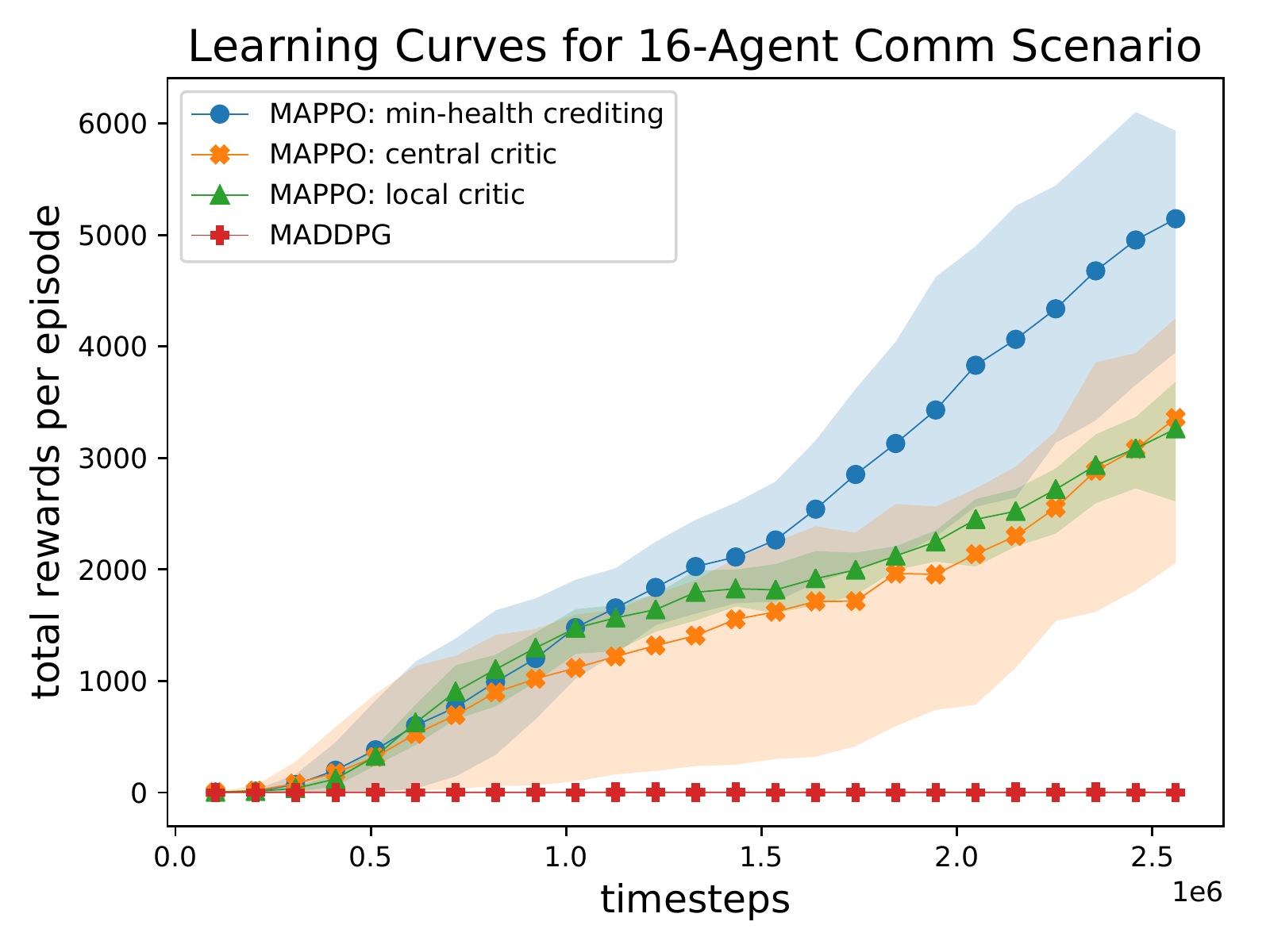}
	\label{fig:16agent_graph}
	} \\
	\subfloat[]{
	\includegraphics[width=0.45\textwidth]{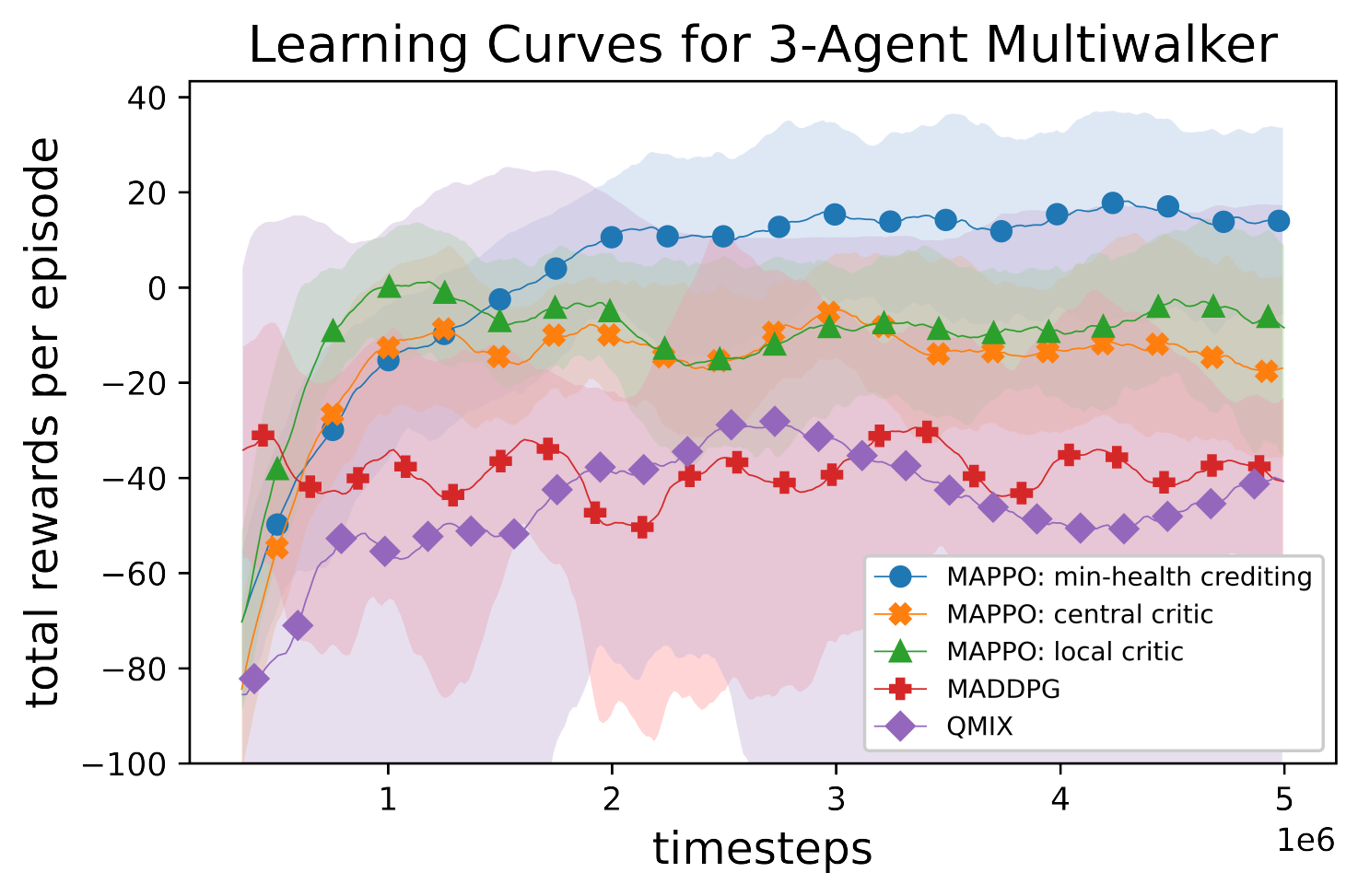}
	\label{fig:3agent_walker}
	} &
	\subfloat[]{
	\includegraphics[width=0.45\textwidth]{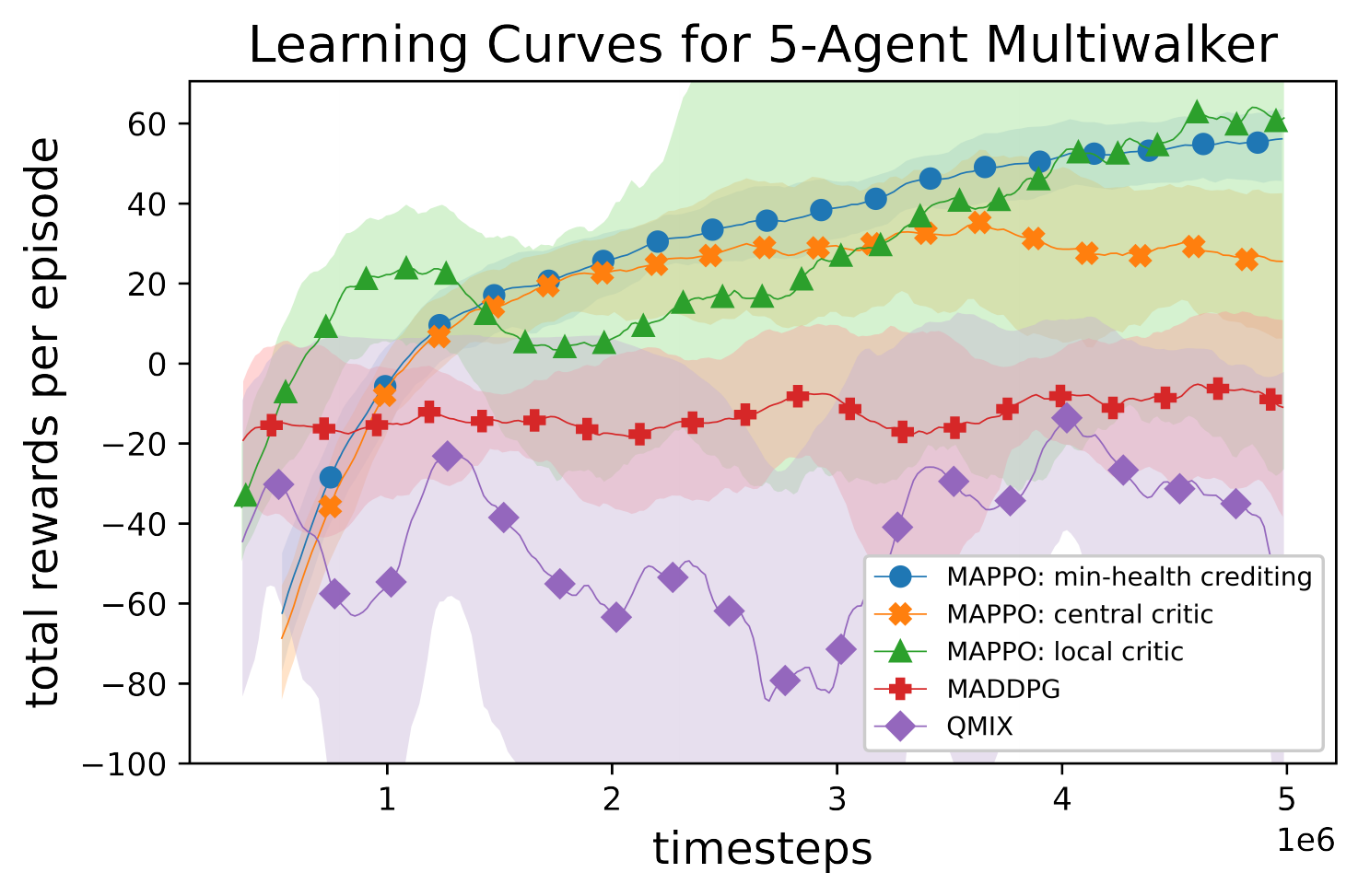}
	\label{fig:5agent_walker}
	}
	\end{tabular}
	\caption{Learning curves for the hazardous navigation (\ref{fig:4agent_spread}, \ref{fig:8agent_spread}),  hazardous communication (\ref{fig:10agent_graph}, \ref{fig:16agent_graph}), and multiwalker (\ref{fig:3agent_walker}, \ref{fig:5agent_walker}) environments. %
	Each campaign defines an environment and the number of agents present in that environment and produces a learning curve for MADDPG, three variants of MAPPO, and QMIX (note: QMIX is only implemented in multiwalker environment where RLlib was used as the training engine). %
	A single learning curve is an average over four independent training experiments with the shaded region representing the minimum and maximum bounds of the four training experiments.
	MADDPG is shown in red crosses. %
	MAPPO with a local critic is shown in green triangles. %
	MAPPO with a non-crediting central critic is shown in orange X's. %
	MAPPO with a central critic and the health-informed, counterfactual baseline given in Equation~\ref{eqn:min_health_baseline} (referred to as \emph{min-health crediting}) is shown in blue circles. 
	QMIX is shown in purple diamonds. 
	The algorithms and environments used to generate these results are publicly available at: 
	\ifarxiv \protect\url{https://github.com/rallen10/ergo_particle_gym}
	\else [WITHHELD TO PRESERVE ANONYMITY DURING REVIEW PROCESS]\fi
	}
	\label{fig:results_table}
\end{figure*}

This section presents experiments to demonstrate health-informed credit assignment in multi-agent learning. %
Two separate implementations of health-informed proximal policy optimization were developed. The first implementation was built with a TensorFlow framework within a forked version of OpenAI's Multi-Agent Particle Environment (MPE) library that is extended to incorporate the concepts of system health and risk-taking 
\cite{lowe2017multi,openai2018multiagent}. %
To take advantage of recently developed multi-agent RL toolsets, a second implementation was built with PyTorch within RLlib \cite{liang2018rllib} and trained in the PettingZoo multiwalker environment \cite{terry2020pettingzoo,gupta2017cooperative}. %

Since the original MPE environments were dedicated to small groups of agents and did not incorporate the concept of health, new scenarios were developed.\footnote{See Appendix~\ref{app:maddpg_benchmark} for a comparison between MADDPG and our proposed variants of MAPPO in an environment taken directly from the original MADDPG paper~\cite{lowe2017multi}.} 
The two MPE scenarios, titled \emph{hazardous navigation} and \emph{hazardous communication network}, and the PettingZoo \emph{multiwalker} environment are described later in this section.

The experiments compare multi-agent deep deterministic policy gradients (MADDPG) and QMIX (for RLlib implementation only) \cite{rashid2018qmix} with three multi-agent variants of proximal policy optimization (MAPPO) referred to as \emph{local critic}, \emph{central critic} and \emph{min-health crediting}. 
The local critic MAPPO uses an advantage function based on local observation value estimates $ \Psi_{i,t} = A_t^\text{GAE}\left(\tau_{i,t_f} \right) $.
The central critic MAPPO uses an advantage function based on joint state value estimates enabled by the centralized learning assumption: $ \Psi_{i,t} = A_t^\text{GAE}\left( \vect{s}_t \right) $.
The minimum-health crediting MAPPO uses the health-informed counterfactual baseline in Equation \ref{eqn:min_health_baseline_ppo}. %

\textbf{Hazardous navigation.} This environment is closely related to Lowe's ``cooperative navigation'' environment where agents must cooperate to reach a set of landmarks~\cite{lowe2017multi}. %
Reward is based on the distance from each landmark to the nearest agent, thus all agents receive the same reward and each landmark should be `covered' by a separate agent in order to maximize reward. %
Our variation of this problem incorporates a hazardous landmark that can probabilistically cause an agent to be terminated (i.e. spontaneously transition to a zero-health state) if the agent is within a threshold distance of the hazard. %
The landmark that poses a hazard is not known until at least one agent crosses its threshold distance. %
To connect this scenario to a real-world problem, consider the use of uninhabited aerial vehicles (UAVs) to monitor wildfires. Each spot fire must be continuously monitored and one spot fire poses significant risk to any UAV within its proximity. 
Figure~\ref{fig:hazard_nav} is a snapshot of the hazardous navigation environment.

\textbf{Hazardous communication network.} This environment consists of two fixed landmark terminals and a group of mobile agents that can serve as communication relays over short distances. %
The objective is for the agents to cooperatively arrange themselves into an uninterrupted chain linking the two terminals. All agents receive the same reward for every time step in which the terminals are connected, and zero reward when the link is broken. 
For each episode an environmental hazard is randomly placed between the terminals, which causes agents in its vicinity to be terminated with some probability $p_{\text{fail}}$. %
Figure~\ref{fig:hazard_comm} shows a snapshot of the hazardous communication network scenario. 

\textbf{Multiwalker. } This environment, originally presented in Gupta et al. \cite{gupta2017cooperative} and adapted into the PettingZoo library \cite{terry2020pettingzoo}, consists of $N$ robots that must collaboratively carry a package. The group of robots is rewarded based on the distance the package has moved. Each agent observes the relative pose of neighboring walkers as well as the pose of the package. As control input, each agent selects the joint torques to apply to their leg joints. The agents must learn to walk as well as carry the package in order to achieve high reward. Unlike the original implementation of multiwalker that used local reward shaping to avoid the multi-agent credit assignment problem \cite{gupta2017cooperative}, this version of the problem enforces a joint reward signal. If an agent falls to the ground, it's health is set two zero and can take no further action. Figure \ref{fig:multiwalker} gives a snapshot of the multiwalker environment.

All three environments are characterized by partially observed state spaces, continuous action spaces, joint rewards, and agent attrition; making them particularly challenging for multi-agent reinforcement learning.

\subsection{Results}

Figure~\ref{fig:results_table} summarizes the results of our experiments. 
A few trends that are common to all campaigns immediately emerge. %
Most notably, we see that MAPPO with health-informed crediting tends to outperform all other multi-agent RL algorithms for each of the environments and group sizes tested. %
Furthermore, by comparing \ref{fig:4agent_spread}-to-\ref{fig:8agent_spread} and \ref{fig:10agent_graph}-to-\ref{fig:16agent_graph}, we see that the performance gap between health-informed crediting and other algorithms tends to increase as the number of agents increases. %
We expect such a trend because, as the number of agents in the environment increases, the more pronounced the multi-agent credit assignment problem becomes, and therefore crediting algorithms such as Equation~\ref{eqn:min_health_baseline} should show increasing benefits. %
In cases where health-informed crediting only slightly outperforms a non-crediting approach (such as \ref{fig:4agent_spread}, \ref{fig:8agent_spread} , and \ref{fig:5agent_walker}), we see that the health-informed baseline has the added benefit of reducing variance between trials within a training campaign. %
This can be seen by comparing the relative sizes of the blue shaded regions with the orange shaded regions.

In general, centralized critics---which include MAPPO: central critic and MAPPO: min-health crediting---tend to outperform local critics and always outperform MADDPG and QMIX for the environments tested. %
MADDPG's underperformance is likely due to the fact that these environments consist of multi-agent group sizes considerably larger than those developed in the original MADDPG work. %
This would explain why MADDPG shows its best performance in Figure~\ref{fig:4agent_spread} which is the scenario most closely aligned with the original ``cooperative navigation'' environment~\cite{lowe2017multi}. %

The poor perofmance of QMIX is almost certainly due to the fact the algorithm is fundamentally designed for discrete action spaces but our environments all consist of continuous action spaces. In order to run QMIX, the action space was discretized into eight action bins per joint, but this did not give sufficient resolution to enable QMIX to learn a suitable policy. This highlights the advantage of credit assignment with a policy optimization algorithm (i.e PPO) in contrast to value-based methods like QMIX and COMA.

A local critic tends to learn more quickly, outperforming other algorithms in the short term, but then plateaus and is overtaken by central critic approaches. 
An interesting exception to this trend is the 5-Agent multiwalker experiment (\ref{fig:5agent_walker}) where local-critic MAPPO appears to perform on par with the health-informed MAPPO. %
The local critic MAPPO aggregate performance is heavily influenced by a trial that seemed to discover an exploit in the environment that produced high rewards with little coordination between the walkers; the agents learned a shaking/jerking motion---instead of a walking motion---that slid the package forward like an object moving on a shaking table. The local critic algorithm was not able to reproduce the behavior in other training, thus the wide variance in experiments.

Experiments were run on Intel Xeon E5-2687W CPUs in a 32-core Linux desktop. The particle environment experiment were run for 50,000 episodes with each episode consisting of 50 time steps and training batches composed of 256 episodes. Training batches were broken into 8 minibatches and run over 8 epochs. %
Multiwalker trained for 5 million timesteps, with training batches of 16384 timesteps, minibatches 4096 timesteps, and 32 epochs. For the multi-agent PPO experiments an entropy coefficient of 0.01 was used for particle environments to ensure sufficient exploration \cite{schulman2017proximal} and 0.0 in multiwalker to stabilize training. 

For all experiments represented in Figure~\ref{fig:results_table} the policy network was composed of a multilayer perceptron (MLP) with 2 fully connected hidden layers, each of which being 64 units wide, and a hyberbolic tangent activation function. %
For experiments that utilized a local critic the value function network matched the architecture of the policy network. For experiments that utilized a centralized critic the value function network had a distinct architecture that was developed empirically. %
Such centralized critic networks consisted of a 8-layer by 64-unit, fully connected MLP that used an exponential linear unit (ELU) activation function~\cite{clevert2015fast}. %
We observe that the ELU activation function tended to outperform the rectified linear unit (ReLU) and hyperbolic tangent activation functions for central critic learning. 

For the particle environments an actor learning rate of \num{1.0e-3} and a central critic learning rate of \num{5.0e-3} was used with the Adam optimizer~\cite{kingma2014adam}. For the multiwalker environment run with RLlib, a learning rate of \num{3e-4} was used.

\section{Conclusions}

In this paper we have proposed a definition for system health and shown how it can be used in policy gradient methods to improve multi-agent reinforcement learning in a certain class of Dec-POMDPs. %
The techniques presented here are well suited for solving continuous-control multi-robot coordination problems in hazardous environments such as search and rescue (e.g. exploring burning or collapsing buildings) disaster relief (e.g. mapping wildfires or toxic chemical leaks with groups of UAVs) and coordinated load carrying. %
These techniques are also well suited for reinforcement learning in multi-agent adversarial game environments that exhibit agent attrition, such as StarCraft II and DOTA 2 \cite{vinyals2019grandmaster,openai2019five}. %

This work raises several questions that merit future investigation. %
The logical next step would be to explore whether a similar form of counterfactual reasoning can help address multi-agent credit assignment in continuous-control environments that \emph{are not} characterized by system health. This perhaps could be achieved by generating training data in environments with fewer agents than the target environment, but it is uncertain what side effects this type of ``off-environment" experience would have on training.
Additionally, the combination of the action space constriction in \cref{prop:action_constriction} and health-informed policy gradient in \cref{eqn:policy_gradients_multi_agent} and \cref{eqn:min_health_baseline} highlight the need for further investigation of policy gradients on systems where actions chosen by the policy do not exactly match the actions executed by an agent or agents \cite{fujita2018clipped}.

\bibliographystyle{ACM-Reference-Format}
\bibliography{../bibfiles/master.bib}  

\clearpage

%
%
%


\appendix

\section{Extended Proofs for Multi-Agent Policy Gradients}\label{app:multagent_pg}

%

Here elaborate on the equations presented in \cref{subsec:health_marl_pg}. 
Using Equation~\ref{eqn:joint_policy_prop}, we can derive a multi-agent policy gradient---without health-informed baseline---from the single-agent policy gradient. %
We then use this result to provide further detail on the intermediate steps in \cref{lem:non_bias_baseline}.

\begin{lemma}\label{lem:multi_agent_policy_gradient}
For $n$ independent agents per \cref{eqn:joint_policy_prop}
\begin{equation*}
\nabla J ( \vect{\theta} ) \propto g = \mathbb{E}_{\vect{\pi}} \left[ \sum_{i=1}^{n}  G_t \nabla_{\theta_i} \log \pi_{\theta_i} (a_{i,t} \mid \tau_{i,t}, \theta_i ) \right]
\end{equation*}
\end{lemma}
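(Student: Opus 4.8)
# Proof Proposal for Lemma~\ref{lem:multi_agent_policy_gradient}

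The plan is to start from the standard single-agent policy gradient theorem (\cref{eqn:policy_gradient_theorem}) written for the joint policy $\vect{\pi}_{\vect{\theta}}$, and then push the factorization \cref{eqn:joint_policy_prop} through the expression to split the single joint log-gradient into a sum of $n$ per-agent log-gradients. The key algebraic fact is the familiar log-derivative identity $\nabla \vect{\pi}(\vect{u} \mid \vect{s}, \vect{\theta}) = \vect{\pi}(\vect{u} \mid \vect{s}, \vect{\theta}) \nabla \log \vect{\pi}(\vect{u} \mid \vect{s}, \vect{\theta})$, which converts \cref{eqn:policy_gradient_theorem} into an expectation over trajectories sampled from $\vect{\pi}$, with the integrand $q_{\vect{\pi}}(\vect{s},\vect{u}) \nabla \log \vect{\pi}(\vect{u} \mid \vect{s}, \vect{\theta})$. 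Replacing $q_{\vect{\pi}}$ with the sampled return $G_t$ (the usual REINFORCE estimator, justified because $\mathbb{E}[G_t \mid \vect{s}_t, \vect{u}_t] = q_{\vect{\pi}}(\vect{s}_t,\vect{u}_t)$) gives the scalar multiplier that will ride along with the gradient.

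The main step is the factorization of the log-gradient. Taking the logarithm of \cref{eqn:joint_policy_prop} gives
\begin{equation*}
\log \vect{\pi}(\vect{u} \mid \vect{s}, \vect{\theta}) = \sum_{i=1}^{n} \log \pi_i(a_i \mid \tau_i, \theta_i),
\end{equation*}
so that
\begin{equation*}
\nabla_{\vect{\theta}} \log \vect{\pi}(\vect{u} \mid \vect{s}, \vect{\theta}) = \sum_{i=1}^{n} \nabla_{\theta_i} \log \pi_i(a_i \mid \tau_i, \theta_i),
\end{equation*}
where the collapse of $\nabla_{\vect{\theta}}$ onto $\nabla_{\theta_i}$ on the right uses that $\log \pi_i$ depends on $\vect{\theta}$ only through the block $\theta_i$ (this is where independence of the local policies, as stated in the footnote accompanying \cref{eqn:joint_policy_prop}, is used; under parameter sharing $\theta_i = \theta$ the same identity holds with the understanding that each term differentiates the shared parameters through agent $i$'s factor). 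Substituting this sum back into the trajectory expectation and carrying the scalar $G_t$ inside the sum yields exactly
\begin{equation*}
\nabla J(\vect{\theta}) \propto \mathbb{E}_{\vect{\pi}}\left[\sum_{i=1}^{n} G_t \nabla_{\theta_i} \log \pi_{\theta_i}(a_{i,t} \mid \tau_{i,t}, \theta_i)\right],
\end{equation*}
which is the claimed expression.

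I expect the only real subtlety — as opposed to routine manipulation — to be the justification of two points that are usually glossed over: first, that the transition from the true action-value $q_{\vect{\pi}}$ in \cref{eqn:policy_gradient_theorem} to the empirical return $G_t$ preserves the expectation (which follows from the tower property, conditioning on $(\vect{s}_t, \vect{u}_t)$ and using the definition $Q_{\vect{\pi}}(\vect{s}_t,\vect{u}_t) = \mathbb{E}[G_t]$ given in Section~\ref{sec:problem_statement}); and second, that the partial-observability wrinkle — local policies conditioning on histories $\tau_{i,t}$ rather than the full state $\vect{s}_t$ — does not break the derivation, since $\tau_{i,t}$ is a deterministic function of the sampled trajectory and the factorization \cref{eqn:joint_policy_prop} is assumed to hold in that form. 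Everything else is bookkeeping: exchanging finite sums with the gradient and with the expectation, and relabeling the sum over states and the ergodic measure $\mu(\vect{s})$ as a trajectory expectation $\mathbb{E}_{\vect{\pi}}[\cdot]$ in the standard way.
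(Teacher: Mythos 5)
Your proposal follows essentially the same route as the paper's proof in Appendix~\ref{app:multagent_pg}: start from the policy gradient theorem for the joint policy, apply the log-derivative trick, use the factorization in \cref{eqn:joint_policy_prop} to turn the joint log-gradient into a sum of per-agent block gradients, pass to a sampled expectation, and replace $q_{\vect{\pi}}$ with $G_t$. Your added remarks on the tower-property justification for the $q_{\vect{\pi}} \to G_t$ step and on conditioning on histories $\tau_{i,t}$ are correct and slightly more careful than the paper, but do not change the argument.
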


\begin{proof}
From Sutton and Barto \cite{sutton2018reinforcement_chp13} and \cref{eqn:joint_policy_prop} we have:
\begin{align*}
g =& \sum_{\vect{s}}{\mu (\vect{s})}\sum_{\vect{u}}{q_{\vect{\pi}}(\vect{s}, \vect{u}) \nabla_{\vect{\theta}}\vect{\pi}(\vect{u} \mid \vect{s}, \vect{\theta})} \\
=& \sum_{\vect{s}}{\mu (\vect{s})}\sum_{\vect{u}}{q_{\vect{\pi}}(\vect{s}, \vect{u}) \vect{\pi}(\vect{u} \mid \vect{s}, \vect{\theta})} \cdot \\
& \qquad \nabla_{\vect{\theta}}\log{\vect{\pi}(\vect{u} \mid \vect{s}, \vect{\theta})} \\
=& \sum_{\vect{s}}{\mu (\vect{s})}\sum_{\vect{u}}q_{\vect{\pi}}(\vect{s}, \vect{u}) \vect{\pi}(\vect{u} \mid \vect{s}, \vect{\theta}) \cdot \\
& \qquad \nabla_{\vect{\theta}}\log{\prod_{i=1}^{n} \pi_{i}(a_{i} \mid \tau_{i}, \theta_i)} \\
=& \sum_{\vect{s}}{\mu (\vect{s})}\sum_{\vect{u}}{q_{\vect{\pi}}(\vect{s}, \vect{u}) \vect{\pi}(\vect{u} \mid \vect{s}, \vect{\theta})} \cdot \\
& \qquad \nabla_{\vect{\theta}}\sum_{i=1}^{n}\log{\pi_{i}(a_{i} \mid \tau_{i}, \theta_i)} \\
=& \sum_{\vect{s}}{\mu (\vect{s})}\sum_{\vect{u}}{q_{\vect{\pi}}(\vect{s}, \vect{u}) \vect{\pi}(\vect{u} \mid \vect{s}, \vect{\theta})} \cdot \\
& \qquad \sum_{i=1}^{n} \nabla_{\theta_i} \log{\pi_{i}(a_{i} \mid \tau_{i}, \theta_i)},
\end{align*}
which gives the intermediate result
\begin{multline}\label{eqn:multiagent_pg_1}
g = \sum_{\vect{s}}{\mu (\vect{s})}\sum_{\vect{u}}{\vect{\pi}(\vect{u} \mid \vect{s}, \vect{\theta})} \cdot \\
\sum_{i=1}^{n} q_{\vect{\pi}}(\vect{s}, \vect{u}) \nabla_{\theta_i} \log{\pi_{i}(a_{i} \mid \tau_{i}, \theta_i)}.
\end{multline}

Continuing the derivation by replacing sums over variables with expectations of samples drawn following the policy $\vect{\pi}$ \cite{sutton2018reinforcement_chp13}
\begin{align*}
g &= \mathbb{E}_{\vect{\pi}} \left[ \sum_{\vect{u}}{\vect{\pi}(\vect{u} \mid \vect{s}_t, \vect{\theta})} \right. \cdot \\
& \qquad \left. \sum_{i=1}^{n} q_{\vect{\pi}}(\vect{s}_t, \vect{u}) \nabla_{\theta_i} \log{\pi_{i}(a_{i} \mid \tau_{i,t}, \theta_i)} \right] \\
&= \mathbb{E}_{\vect{\pi}} \left[ \sum_{i=1}^{n} q_{\vect{\pi}}(\vect{s}_t, \vect{u}_t) \nabla_{\theta_i} \log{\pi_{i}(a_{i,t} \mid \tau_{i,t}, \theta_{i})} \right].
\end{align*}
Finally arriving at 
\begin{equation}\label{eqn:multiagent_pg_2}
g = \mathbb{E}_{\vect{\pi}} \left[\sum_{i=1}^{n} G_t \nabla_{\theta_i} \log{\pi_{i}(a_{i,t} \mid \tau_{i,t}, \theta_i)} \right]
\end{equation}
\end{proof}

From \cref{eqn:multiagent_pg_2} we can then arrive at \cref{eqn:policy_gradients_multi_agent} by substituting $\Psi_{i,t}$ for $G_t$ in order to  affect the bias, variance, and credit assignment in the policy gradient.

To investigate the convergence of the health-informed policy gradient, let us apply Equation~\ref{eqn:min_health_baseline} in place of $q_{\vect{\pi}}(\vect{s}, \vect{u})$ in the final term in Equation~\ref{eqn:multiagent_pg_1}. We have

\begin{align*}
\begin{split}
g_{\Psi} &= \sum_{\vect{s}}{\mu (\vect{s})} \sum_{\vect{u}}{\vect{\pi}(\vect{u} \mid \vect{s}, \vect{\theta})} \sum_{i=1}^{n} \cdot \\
& \qquad \left( h_i q_{\vect{\pi}}(\vect{s}, \vect{u}) - b\left(\vect{s}^{\neg i}\right)\right) \nabla_{\theta_i} \log{\pi_{i}(a_{i} \mid \tau_{i}, \theta_{i})} \\
&= \sum_{\vect{s}}{\mu (\vect{s})} \sum_{\vect{u}}{\vect{\pi}(\vect{u} \mid \vect{s}, \vect{\theta})} \cdot \left( \vphantom{\sum_{i=1}^{n}} \right.\\
& \qquad \sum_{i=1}^{n} h_i q_{\vect{\pi}}(\vect{s}, \vect{u}) \nabla_{\theta_i} \log{\pi_{i}(a_{i} \mid \tau_{i}, \theta_{i})} + \\
&\qquad -\sum_{i=1}^{n} b\left(\vect{s}^{\neg i}\right) \nabla_{\theta_i} \log{\pi_{i}(a_{i} \mid \tau_{i}, \theta_{i})} \left. \vphantom{\sum_{i=1}^{n}} \right) \\
&= g_h - g_b
\end{split}
\end{align*}

Now we show that $g_b = 0$. Note that this is equivalent to the proof given in \cref{lem:non_bias_baseline} but with more intermediate steps included to help the reader follow. 

\begin{align*}
\begin{split}
g_b &= \sum_{\vect{s}}{\mu (\vect{s})} \sum_{\vect{u}}{\vect{\pi}(\vect{u} \mid \vect{s}, \vect{\theta})} \cdot \\
& \qquad \sum_{i=1}^{n} b\left(\vect{s}^{\neg i}\right) \nabla_{\theta_i} \log{\pi_{i}(a_{i} \mid \tau_{i}, \theta_{i})} \\
&= \sum_{\vect{s}}{\mu (\vect{s})} \sum_{\vect{u}} \sum_{i=1}^{n} \vect{\pi}(\vect{u} \mid \vect{s}, \vect{\theta})  \cdot \\
& \qquad b\left(\vect{s}^{\neg i}\right) \nabla_{\theta_i} \log{\pi_{i}(a_{i} \mid \tau_{i}, \theta_{i})} \\
&= \sum_{\vect{s}}{\mu (\vect{s})} \sum_{i=1}^{n} \sum_{\vect{u}} \vect{\pi}(\vect{u} \mid \vect{s}, \vect{\theta}) \cdot \\
& \qquad b\left(\vect{s}^{\neg i}\right) \nabla_{\theta_i} \log{\pi_{i}(a_{i} \mid \tau_{i}, \theta_{i})} \\
&= \sum_{\vect{s}}{\mu (\vect{s})} \sum_{i=1}^{n} b\left(\vect{s}^{\neg i}\right) \cdot \\
& \qquad \sum_{\vect{u}} \vect{\pi}(\vect{u} \mid \vect{s}, \vect{\theta}) \nabla_{\theta_i} \log{\pi_{i}(a_{i} \mid \tau_{i}, \theta_{i})} \\
&= \sum_{\vect{s}}{\mu (\vect{s})} \sum_{i=1}^{n} b\left(\vect{s}^{\neg i}\right) \sum_{\vect{u}} \prod_{j=1, j \neq i}^{n}\pi_j(a_j \mid \tau_j, \theta_j) \cdot \\
& \qquad \pi_i(a_i \mid \tau_i, \theta_i) \nabla_{\theta_i} \log{\pi_{i}(a_{i} \mid \tau_{i}, \theta_{i})} \\
&= \sum_{\vect{s}}{\mu (\vect{s})} \sum_{i=1}^{n} b\left(\vect{s}^{\neg i}\right) \sum_{\vect{u}} \prod_{j=1, j \neq i}^{n}\pi_j(a_j \mid \tau_j, \theta_j) \cdot \\
& \qquad \nabla_{\theta_i} \pi_{i}(a_{i} \mid \tau_{i}, \theta_{i}) \\
\end{split}
\end{align*}
\begin{align*}
\begin{split}
&= \sum_{\vect{s}}{\mu (\vect{s})} \sum_{i=1}^{n} b\left(\vect{s}^{\neg i}\right) \sum_{\vect{u}^{\neg i}} \sum_{a_i} \prod_{j=1, j \neq i}^{n}\pi_j(a_j \mid \tau_j, \theta_j) \cdot \\
& \qquad \nabla_{\theta_i} \pi_{i}(a_{i} \mid \tau_{i}, \theta_{i}) \\
&= \sum_{\vect{s}}{\mu (\vect{s})} \sum_{i=1}^{n} b\left(\vect{s}^{\neg i}\right) \sum_{\vect{u}^{\neg i}} \prod_{j=1, j \neq i}^{n}\pi_j(a_j \mid \tau_j, \theta_j) \cdot \\
& \qquad  \sum_{a_i} \nabla_{\theta_i} \pi_{i}(a_{i} \mid \tau_{i}, \theta_{i}) \\
&= \sum_{\vect{s}}{\mu (\vect{s})} \sum_{i=1}^{n} b\left(\vect{s}^{\neg i}\right) \sum_{\vect{u}^{\neg i}} \prod_{j=1, j \neq i}^{n}\pi_j(a_j \mid \tau_j, \theta_j) \cdot \\
& \qquad \nabla_{\vect{\theta}} \sum_{a_i} \pi_{i}(a_{i} \mid \tau_{i}, \theta_{i}) \\
&= \sum_{\vect{s}}{\mu (\vect{s})} \sum_{i=1}^{n} b\left(\vect{s}^{\neg i}\right) \sum_{\vect{u}^{\neg i}} \prod_{j=1, j \neq i}^{n}\pi_j(a_j \mid \tau_j, \theta_j) \cdot \\
& \qquad \nabla_{\vect{\theta}}1 \\
&= \sum_{\vect{s}}{\mu (\vect{s})} \sum_{i=1}^{n} b\left(\vect{s}^{\neg i}\right) \sum_{\vect{u}^{\neg i}} \prod_{j=1, j \neq i}^{n}\pi_j(a_j \mid \tau_j, \theta_j) 0\\
&= 0
\end{split}
\end{align*}

We should expect this result since the baseline term $b\left(\vect{s}^{\neg i}\right)$ is independent of actions---using the same logic provided by Sutton and Barto \cite{sutton2018reinforcement_chp13}---however it requires more manipulation to prove it for the multi-agent case.

%
%

%
%

\section{Non-Health-Based Environments}\label{app:maddpg_benchmark}
	
The environments described in \cref{sec:experiments} were designed to incorporate the concept of system health. It could be argued that these types of environments are outside the scope of what for which MADDPG was orginially designed, and therefore not a fair point of comparison. %
To address this concern, we ran our three variants of multi-agent PPO on the cooperative navigation environment that appears in the original MADDPG paper \cite{lowe2017multi}.

\begin{figure}
	\centering
	\includegraphics[width=0.5\textwidth]{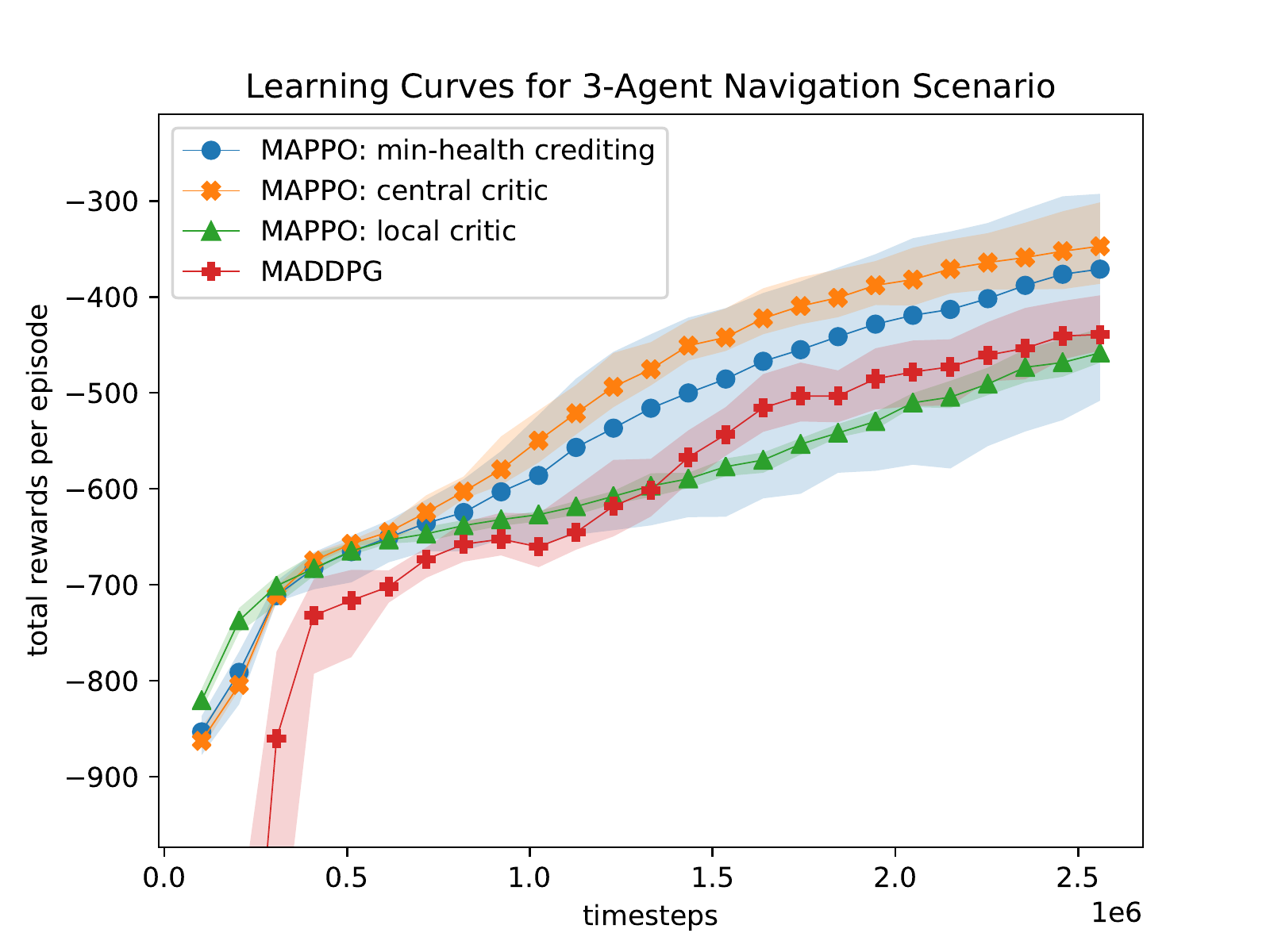}
	\caption{Learning curves for the cooperative navigation problem with 3 agents and \emph{no hazardous landmarks}. Each of the four learning curve is an average over four independent training experiments with the shaded region representing the minimum and maximum bounds of the four training experiments.}
	\label{fig:learning_curves_spread}
\end{figure}

Figure \ref{fig:learning_curves_spread} shows MADDPG successfully learning an effective policy over a 50,000 episode training experiment. 
This learning curve is what we expect to see given that this environment was originally developed in conjunction with MADDPG. It is shown, however, that MADDPG underpeforms all of the MAPPO implementations except the local-critic MAPPO. %
The central-critic, non-crediting variant of MAPPO produces the best performance, outperforming the minimum-health baseline crediting variant. This is contrast to the results for the hazardous communication scenario discussed in Section \ref{sec:experiments}. %
However, this is to be expected due to the fact that the cooperative navigation environment using in \cref{fig:learning_curves_spread} does not encapsulate the concepts of health or risk. Therefore no relevant data is generated that trains the value function network how to estimate the value of the minimum-health baseline in \cref{eqn:min_health_baseline}. %
We see that this causes the minimum-health crediting technique to display high variance between training experiments. The blue shaded region indicates that worst-performing training run on minimum-health MAPPO under-performs all other algorithms, while the best-performing training run out performs all other algorithms.

\end{document}
